\def\colorful{1}
\title{Provably efficient, succinct, and precise explanations}
\author{%
Guy Blanc\thanks{Alphabetical order.} \\ Stanford University \\ \texttt{gblanc@stanford.edu} \And Jane Lange$^*$ \\ Massachusetts Institute of Technology \\ \texttt{jlange@mit.edu} \And Li-Yang Tan$^*$ \\ Stanford University \\ \texttt{liyang@cs.stanford.edu}  
% examples of more authors
  % \And
  % Coauthor \\
  % Affiliation \\
  % Address \\
  % \texttt{email} \\
  % \AND
  % Coauthor \\
  % Affiliation \\
  % Address \\
  % \texttt{email} \\
  % \And
  % Coauthor \\
  % Affiliation \\
  % Address \\
  % \texttt{email} \\
  % \And
  % Coauthor \\
  % Affiliation \\
  % Address \\
  % \texttt{email} \\
}
\newcommand{\NS}{\mathrm{NS}}
\begin{document}

\maketitle

\begin{abstract}
  We consider the problem of explaining the predictions of an arbitrary blackbox model $f$: given query access to $f$ and an instance $x$, output a small set of $x$'s features that in conjunction essentially determines~$f(x)$. We design an efficient algorithm with provable guarantees on the succinctness and precision of the explanations that it returns. Prior algorithms were either efficient but lacked such guarantees, or achieved such guarantees but were inefficient. 
  
We obtain our algorithm via a connection to the problem of {\sl implicitly} learning decision trees.  The implicit nature of this learning task allows for efficient algorithms even when the complexity of~$f$ necessitates an intractably large surrogate decision tree.  We solve the implicit learning problem by bringing together techniques from learning theory, local computation algorithms, and complexity theory.

  Our approach of “explaining by implicit learning” shares elements of two previously disparate methods for post-hoc explanations, global and local explanations, and we make the case that
it enjoys advantages of both.
   %\jnote{todo: abstract 1 paragraph}
\end{abstract}

\section{Introduction}

Modern machine learning systems have access to unprecedented amounts of computational resources and data, enabling them to rapidly train  sophisticated models. These models achieve remarkable performance on a wide range of tasks, but their success appears to come at a  price: the complexity of these models,  responsible for their expressivity and accuracy, makes their inner workings inscrutable to human beings, rendering them powerful but opaque blackboxes.  As these blackboxes become central in mission-critical systems and their predictions increasingly relied upon in high-stakes decisions, there is a growing urgency to address their lack of interpretability~\cite{DK17,Lip18}.

There has therefore been a  surge of interest in the problem of {\sl explaining} the predictions of blackbox models: {\sl why} did a model $f$ assign an instance $x$ the label $f(x)$?   While there are numerous possibilities for what qualifies as an explanation (see e.g.~\cite{SK10,BSHKHM10,SVZ14,RSG16,KL17,LL17,STY17}), in this work we consider an explanation to be a set of $x$'s features that in conjunction essentially determines~$f(x)$~\cite{RSG18}. Following terminology from complexity theory, we call such an explanation a {\sl certificate}. 

We seek {\sl succinct} and {\sl precise} certificates. Intuitively, this means that we would like  the set of features to be as small as possible, and for it to nonetheless be a sufficient explanation for $f(x)$ with high probability; more formally, we have the following definition: 

\begin{definition}[Succinct and precise certificates] 
\label{def:anchors} 
Let $f : \bits^d \to \bits$ be a classifier  and $x \in \bits^d$ be an instance. We say that a set $C\sse [d]$ of features is a {\sl size-$k$ $\eps$-error certificate for $x$} if both of the following hold: 
\begin{itemize} 
\item[$\circ$] {\sl Succinctness:} $|C| \le k$, 
\item[$\circ$] {\sl Precision:} $\ds \Prx_{\by\sim \bits^d}\big[f(\by) \ne f(x)\mid \by_C = x_C\big] \le \eps$,
\end{itemize} 
where we write `$\by_C = x_C$' to mean that $\by$ and $x$ agree on all the features in $C$. 
 \end{definition} 

%\subsection{Our main result}

\paragraph{Our main result.} We give an efficient certificate-finding algorithm with provable guarantees on the  succinctness and precision of the certificates that it returns. Our algorithm is model agnostic, requiring no assumptions about the structure of $f$.  

\begin{theorem}
\label{thm:main-intro}
Our algorithm $\mathscr{A}$ is given as input an instance $x$ and parameters $\eps,\delta\in (0,1)$.  It makes queries to a blackbox model $f$ and returns a certificate $\mathscr{A}(x)$ for $x$ with the following guarantees. 

With probability $1-\delta$ over a uniform random instance $\bx\sim\bits^d$, $\mathscr{A}(\bx)$ is an $\eps$-error certificate of size $\poly(\mathcal{C}(f),1/\eps,1/\delta)$, where $\mathcal{C}(f)$ is the ``average certificate complexity" of $f$.  The time and query complexity of $\mathscr{A}$ is $\poly(d,\mathcal{C}(f),1/\eps,1/\delta)$.
\end{theorem}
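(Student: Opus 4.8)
The plan is to establish Theorem~\ref{thm:main-intro} in three stages: (i) reduce certificate-finding to the problem of \emph{implicitly} learning a decision-tree surrogate $T$ for $f$; (ii) show that when $\mathcal{C}(f)$ is small, such a $T$ exists with small \emph{average} depth and is produced by a top-down, noise-sensitivity--guided splitting rule (the procedure $\BuildStabilizingDT$); and (iii) implement that construction as a local computation algorithm, so that on input $x$ we only compute the root-to-leaf path of $x$ in $T$ rather than all of $T$. For stage (i): if $T$ is a decision tree and $x$ reaches leaf $\ell$ via the partial restriction $\rho_\ell$, then every $\by$ with $\by_{\rho_\ell}=x_{\rho_\ell}$ also reaches $\ell$, so the set $C_\ell$ of coordinates queried on the root-to-$\ell$ path is automatically a size-$\depth(\ell)$ certificate \emph{for $T$}. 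To upgrade this to an $\eps$-error certificate for $f$, I would design $T$ so that each leaf subcube is $\eta$-close to a constant under the uniform measure, with $\eta=\Theta(\min(\eps,\delta))$. Then for every $x$ whose leaf label agrees with $f(x)$ --- a $(1-\eta)$-fraction of all inputs, by averaging the leaf error rates --- the conditional error $\Prx_{\by}[\,f(\by)\ne f(x)\mid \by_{C_\ell}=x_{C_\ell}\,]$ equals the exception rate inside $\ell$'s subcube, which is $\le\eta\le\eps$. So it suffices to produce such a $T$ and to answer per-instance path queries efficiently, and we output $\mathscr{A}(x):=C_{\ell(x)}$.

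For stage (ii), $T$ is built top-down: at a node with restriction $\rho$, if $f_\rho$ is $\eta$-close to a constant make a leaf labeled by that constant, and otherwise split on a coordinate $i$ of (near-)maximum influence $\mathrm{Inf}_i(f_\rho)$ --- the coordinate whose revelation most reduces the variance/noise-sensitivity of the restricted function. The heart of the proof is a structural lemma asserting that the average depth $\mathbb{E}_{\bx}[\,|C_{\ell(\bx)}|\,]$ of this tree is $\poly(\mathcal{C}(f),1/\eta)=\poly(\mathcal{C}(f),1/\eps,1/\delta)$. Two elementary facts seed the argument. First, $\mathrm{Inf}(f)\le\mathcal{C}(f)$: any coordinate whose flip changes $f(\bx)$ must belong to every certificate for $\bx$, so the sensitivity of $\bx$ is at most $\mathrm{C}_{\bx}(f)$, and averaging over $\bx$ gives the bound; similarly $\NS_\gamma(f)\le\gamma\,\mathcal{C}(f)$, since a $\gamma$-noisy copy of $\bx$ preserves a size-$c$ certificate with probability $(1-\gamma)^c\ge 1-\gamma c$. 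Second, splitting a frontier node on coordinate $i$ decreases the frontier's mean variance (equivalently, its noise-sensitivity mass) by an amount controlled by $\mathrm{Inf}_i(f_\rho)$, whereas internal nodes are $\Omega(\eta)$-far from constant and hence must have a coordinate of non-negligible influence. Converting this into a \emph{polynomial} bound on the average, rather than the exponential bound a naive ``one influential variable at a time'' potential argument yields, is the main obstacle: it requires a careful amortized, round-based analysis, and this is also where complexity-theoretic relationships between $\mathrm{Inf}$, $\NS$, and certificate complexity enter.

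For stage (iii), the point is that $\depth(T)$ and therefore $|T|$ can be astronomically large, so we never materialize $T$; on input $x$ we run $\BuildStabilizingDT$ \emph{only along the path of $x$}. Each node on that path costs $\poly(d,1/\eps,1/\delta)$: estimate the distance of $f_\rho$ from a constant and the $d$ influences $\mathrm{Inf}_i(f_\rho)$ to accuracy $\Theta(\eta)$ by sampling uniformly from $\rho$'s subcube and querying $f$, apply Chernoff, and take the argmax. This procedure never uses $\mathcal{C}(f)$, so the algorithm is oblivious to it and merely \emph{inherits} efficiency when it is small. By stage (ii) and Markov's inequality, a $(1-\delta)$-fraction of inputs have paths of length $\poly(\mathcal{C}(f),1/\eps,1/\delta)$, so the total cost along the path is $\poly(d,\mathcal{C}(f),1/\eps,1/\delta)$; folding the (polynomially small) estimation-failure probability and the leaf-mislabeling probability into $\delta$ by a union bound over the short path completes the guarantee. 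Since a single invocation walks only one path, no cross-query consistency is required for the main theorem --- the local-computation viewpoint is precisely what lets ``explaining by implicitly learning a global surrogate'' remain feasible even when the surrogate itself is intractably large.
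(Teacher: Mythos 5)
Your three-stage plan (reduce to implicit DT learning; build a noise-sensitivity-guided surrogate; compute only along $x$'s path) matches the paper's architecture, and your stages (i) and (iii) are essentially correct modulo small choices (the paper uses a complete depth-$k$ tree with a final sampling-based verification, rather than an adaptive-depth tree and averaging over leaves; and the paper's split criterion is the noise-stabilizing score $\NS_p(f_\alpha) - \E_{\bb}[\NS_p(f_{\alpha,\,i=\bb}]$, a mild variant of the influence criterion you describe). But stage (ii) has a genuine gap, and it is exactly the mathematical heart of the theorem.

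You write that converting the one-influential-variable-at-a-time potential argument into a \emph{polynomial} depth bound ``requires a careful amortized, round-based analysis'' and is ``where complexity-theoretic relationships between $\mathrm{Inf}$, $\NS$, and certificate complexity enter'' --- but you never supply that analysis, and the elementary facts you do supply ($\mathrm{Inf}(f)\le \mathcal{C}(f)$ and $\NS_\gamma(f)\le \gamma\,\mathcal{C}(f)$) cannot by themselves close the gap. The obstacle is concrete: at an internal node $\rho$ that is $\Omega(\eta)$-far from constant, all you can conclude from $\mathrm{Var}(f_\rho)=\Omega(\eta)$ is that \emph{total} influence is $\Omega(\eta)$, so the \emph{max} influence could be as small as $\Omega(\eta/d)$. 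A potential argument then needs $\Theta(d/\eta)$ rounds, which is not $\poly(\mathcal{C}(f),1/\eps,1/\delta)$. To get a dimension-independent depth, one must first show that small $\mathcal{C}(f)$ forces small (distributional) \emph{decision-tree} complexity $\mathcal{D}(f,\eps)$, and then use $\mathcal{D}$ to lower-bound the max influence at each step. That first step is not elementary: it is precisely Smyth's theorem (resolving the Tardos conjecture). The paper's route is: Markov over $\bx$ to convert $\mathcal{C}(f)=\E_{\bx}[\mathcal{C}(f,\bx)]$ into a global certificate bound, apply Smyth to obtain $\mathcal{D}(f,\eps)\le O(\mathcal{C}(f)^2/\eps^9)$ (Corollary~\ref{cor:smyth corollary}), then invoke the structural theorem (Lemma 2.1 of~\cite{BLT-test}) that the depth-$k$ noise-stabilizing tree with $k=O((\mathcal{D}(f,\eps)/\eps)^3)$ is $O(\eps)$-close to $f$, and finally feed these into Lemma~\ref{lem:learning-to-certificate-finding}. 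Without Smyth (or an equivalent $\mathcal{C}\Rightarrow\mathcal{D}$ conversion), your sketch does not establish the depth bound, and hence neither the succinctness nor the query-complexity claims of Theorem~\ref{thm:main-intro}.
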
 

There is a sizable literature on certificate finding and related problems, studying them from both algorithmic and hardness perspectives.    Previous algorithms, which we overview next, were either efficient but lacked provable guarantees on the succinctness and precision of the certificates that they return, or achieved such guarantees but were inefficient.     Furthermore, our algorithm circumvents several  hardness results for finding succinct and precise certificates, which we also discuss next.

\subsection{Prior work on certificate finding}

\paragraph{Efficient heuristics.}
Recent work of Ribeiro, Singh, and Guestrin~\cite{RSG18} studies certificates (which they term {\sl anchors})  from an empirical perspective.  They demonstrate, through experiments and user studies, the effectiveness of certificates as explanations across a variety of domains and tasks. Their work highlights the ease of understanding of certificates by human beings and their clarity of scope.  The authors also point out several advantages of certificates over LIME explanations~\cite{RSG16}.

Their work gives an efficient heuristic, based on greedy search, for finding high-precision certificates.  However, there are no guarantees on the succinctness of these certificates, and in fact, it is easy to construct classifiers $f : \bits^d\to\bits$ with near-minimal certificate complexity, $\mathcal{C}(f) = 2$, for which their heuristic returns certificates of near-maximal size, $\Omega(d)$. (We elaborate on this in the body of this paper.) This should be contrasted with the guarantees of~\Cref{thm:main-intro}; specifically, the dimension-independent bound on the sizes of the certificates that our algorithm returns.

\paragraph{Prime implicants.} A separate  line of work has focused on finding {\sl prime implicants} \cite{I20, DH20, INS20, INM19, SCD18}.  In the terminology of~\Cref{def:anchors}, an implicant is a $0$-error certificate, and an implicant is prime if its error increases whenever a single feature is removed from it.  We note that an implicant being prime (i.e.~of minimal size) is  not equivalent to it being the most succinct (i.e.~of minimum size): a classifier $f : \bits^d\to\bits$ can have an implicant of size $1$ and also prime implicants of size $d-1$.

While $0$-error certificates are desirable for their perfect precision, it is often impossible to find them efficiently. With no assumptions on $f$, even {\sl verifying} that a certificate has $0$ error requires querying~$f$ on the potentially exponentially many possible instances consistent with that certificate. Existing algorithms therefore focus on specific model classes. For example, \cite{DH20} gives an algorithm for enumerating all prime implicants for a restricted class of circuits. Still, there are numerous hardness results even for model-specific algorithms.  For CNF formulas, determining whether a certificate is $0$-error is $\mathrm{NP}$-complete.  For general size-$m$ circuits, finding the an $0$-error certificate that has size within $m^{1 - \eps}$ of the smallest possible is $\mathrm{NP}^{\mathrm{NP}}$-complete for any $\eps > 0$ \cite{U01}. %Therefore, even an assuming an oracle for $\mathrm{SAT}$, finding a small $0$-error certificate is likely intractable! 

Setting aside the computational intractability of finding $0$-error certificates, we note that they are in general much less succinct than $\eps$-error certificates. It is easy to construct examples where a size-$1$ certificate with $0.01$-error exists, but the only $0$-error certificate is the trivial one of size $d$ containing all features.  In general, there is a natural tradeoff between the two desiderata of succinctness and precision, and our algorithm allows the user to choose their desired tradeoff rather than forcing them to choose perfect precision at the cost of succinctness.

\paragraph{Hardness of finding approximate certificates.} There are also  intractability results for finding the smallest $\eps$-error certificate.  \cite{WMHK21} show that for any $\eps$, determining whether there exists an $\eps$-error certificate of size $k$ for a given circuit and instance is $\mathrm{NP}^{\mathrm{PP}}$-complete. Furthermore, they show that assuming $\mathrm{P} \ne \mathrm{NP}$, there is no efficient algorithm that can even approximate the size of the smallest $\eps$-error certificate to within a factor of $d^{1-\alpha}$ for any $\alpha > 0$.

\Cref{thm:main-intro} circumvents these hardness results because our algorithm is only expected to succeed for most (at least a $1-\delta$ fraction) rather than all instances, and only returns a small certificate relative to the average certificate complexity of the model, rather than smallest for a particular instance. 

\subsection{Our approach and techniques}
\label{sec:our-contributions} 

We connect certificate finding to a new algorithmic problem that we introduce, that of {\sl implicitly} learning decision trees.
The key to this connection is a deep result from complexity theory, Smyth's theorem~\cite{Smy02}, which enables us to relate the certificate and decision tree complexities of functions.   We then show how recently developed decision tree learning algorithms~\cite{BLT-ITCS,BGLT-NeurIPS1} can be extended to solve the implicit learning problem. In more detail, there are three modular components to our approach: 

\begin{itemize}%[leftmargin=15pt] 
    \item[$\circ$] {\sl Implicitly learning decision trees.}  Decision trees are the canonical example of an interpretable model.  Their predictions admit simple explanations:  a certificate for an instance~$x$ is the root-to-leaf path in the tree that $x$ follows.   A natural and well-studied approach to explaining a blackbox model~$f$ is therefore to first learn a decision tree $T$ that well-approximates~$f$, and with this surrogate decision tree $T$ in hand, one can then output a certificate for any instance $x$ by returning the corresponding path in~$T$~\cite{CS95,BS96,AB07,ZH16,VLJODV17,BKB17,VS20}. 
  A limitation of this approach lies in the fact that many models of interest are inherently complex and cannot be well-approximated by a decision tree of tractable size. 

To circumvent this, we introduce the problem of {\sl implicitly} learning  decision trees.  Roughly speaking, an algorithm for this task allows one to efficiently navigate a surrogate decision tree $T$ for $f$---{\sl without building $T$ in full}.  With such an algorithm, the complexity of finding a certificate scales with the {\sl depth} of $T$, making it exponentially more efficient than building $T$ in full, the complexity of which scales with its  overall {\sl size}.  

\item[$\circ$] {\sl Relating certificate and decision tree complexities.} To translate algorithms for implicitly learning decision trees into algorithms for finding certificates, we apply a theorem of Smyth that relates the certificate complexity of a function to its decision tree complexity.  The notion of certificates is central to complexity theory, where it is basis of the complexity class NP. (Smyth's result resolved a longstanding conjecture of Tardos~\cite{Tar89} that was motivated by the relationship between P and $\text{NP} \cap \text{coNP}$.) 

    \item[$\circ$]  {\sl An efficient algorithm with provable guarantees.}  With the two items above in hand, we are able to leverage recent advances in decision tree learning to design certificate-finding algorithms.  Specifically, we show that the decision tree learning algorithm of Blanc, Gupta, Lange, and Tan~\cite{BGLT-NeurIPS1} can be extended to the setting of implicit learning. Our resulting certificate-finding algorithm is simple: it constructs a certificate $C$ for an instance $x$ by recursively adding to $C$ the most {\sl noise stabilizing} feature.  Fruitful connections between noise stability and learnability have long been known~\cite{KOS04,KKMS08}; our work further demonstrates its utility for certificate finding. 
    \end{itemize} 
    
Our overall approach falls within the framework of {\sl Local Computation Algorithms}~\cite{RTVX11}.  Such algorithms solve computational problems for which the output---in our case, the surrogate decision tree---is so large that returning it in its entirety would be intractable.  Local computation algorithms are able access and return only select parts of the output---in our case, the path through the tree that corresponds to the specific instance $x$ of interest---efficiently and consistently. 

\subsection{Discussion of broader context: global and local explanations}  
\label{sec: global vs local explanations}
Existing approaches to post-hoc explanations mostly fall into two categories.  {\sl Global explanations} seek to capture the behavior of the entire model $f$, often by approximating it with a simple and interpretable model such as a decision tree~\cite{CS95,BS96,AB07,ZH16,VLJODV17,BKB17,VS20} or a set of rules~\cite{LKCL19,LAB20}.  A limitation of such approaches, alluded to above and also discussed in numerous prior works (see e.g.~\cite{RSG16,RSG18}),  is that complex models often cannot be well-approximated by simple ones.  In other words, using a simple surrogate model necessarily results in low fidelity to the original model.

Our work falls in the second category of {\sl local explanations}~\cite{SK10,BSHKHM10,SVZ14,RSG16,KL17,LL17,RSG18}.  These seek to explain $f$'s label for specific instances $x$.  Several of these approaches are based on notions of $f$ being ``simple around $x$”: for example, LIME explanations~\cite{RSG16} show that $f$ is ``approximately linear around $x$”, and certificates, the focus of our work, show that $f$ is ``approximately constant in a subspace containing $x$”.  The corresponding algorithms can therefore be run on models that are too complex to be faithfully represented by a simple global surrogate model.

While our work falls in the category of local explanations, we believe that our new approach of ``explaining by implicit learning” enjoys advantages of both local and global methods.  The local explanations that our algorithm returns are all  consistent with a {\sl single} decision tree $T$ that well-approximates $f$ globally.  The implicit nature of the learning task allows for $T$ to be intractably large, hence allowing for corresponding algorithms to be run on complex models $f$, circumventing the limitation of global methods discussed above.   On the other hand, the existence of a single global surrogate decision tree, albeit one that may be too large to construct in full, affords several advantages of global methods.  We list a few examples: 

\begin{itemize}%[leftmargin=15pt] 
\item[$\circ$]  {\sl Partial information about global structure.} Our implicit learning algorithm can efficiently construct the subgraph of $T$ comprising the root-to-leaf paths of a few specific instances; alternatively, it can construct the subtree of $T$ rooted at a certain node, or the first few layers of~$T$.  All of these could shed light on the global behavior of~$f$.

\item[$\circ$]  {\sl Feature importance information.}  Our implicit decision tree $T$ has useful properties beyond being a good approximator of $f$.  As we will show, its structure carries valuable semantic information about $f$, since the feature queried at any internal node $v$ is the most ``noise stabilizing” feature of the subfunction $f_v$.   The features in the certificates that our algorithm returns can be ordered accordingly, each being the most noise stabilizing feature of the subfunction determined by $x$'s value on the previous features.  

\item[$\circ$] {\sl Measures of similarity between instances.}  Every decision tree $T$ naturally induces a ``similarity distance” between instances, given by the depth of their lowest common ancestor within $T$.  Therefore pairs of instances that share a long common path in $T$ before diverging (or do not diverge at all) are considered very similar, whereas pairs of instances that diverge early on, say at the root, are considered very dissimilar. (Such tree-based distance functions have been influential in the study of hierarchical clustering; see e.g.~\cite{Das16}.)  This distance between two instances can be easily calculated from the certificates that our algorithm returns for them. 

\end{itemize} 

\subsection{Preliminaries} 
\label{sec: preliminaries}

\paragraph{Feature and distributional assumptions.}  We focus on binary features and the uniform distribution over instances.  Several aspects of our approach extend to more general feature spaces and distributions; we elaborate on this in the conclusion.
We use {\bf boldface} to denote random variables (e.g.~$\bx \sim \bits^d$), and unless otherwise stated, all probabilities and expectations are with respect to the uniform distribution.

\paragraph{Decision tree and certificate complexity.}

The {\sl depth} of a decision tree is the length of the longest root-to-leaf path, and its {\sl size} is the number of leaves.

 \begin{definition}[Decision tree complexity]
     \label{def:optimal dt depth}
     The {\sl $\eps$-error decision tree complexity} of $f: \bits^d \to \bits$, denoted $\mathcal{D}(f,\eps)$, is the smallest $k$ for which there exists a depth-$k$ decision tree $T$ satisfying $\Pr[T(\bx) \neq f(\bx)] \leq \eps$.
 \end{definition}
 
\begin{definition}[Certificate complexity]
\label{def:anchor complexity}
    For a function $f: \bits^d \to \bits$ and an instance $x \in \bits^d$, the  {\sl $\eps$-error certificate complexity $f$ at $x$}, denoted $\mathcal{C}(f,x,\eps)$, is the size of the smallest $\eps$-error certificate for $x$.  That is, $\mathcal{C}(f,x, \eps)$ is the size of the smallest set $C\sse [d]$ for which
    \begin{align*}
        \Prx_{\by\sim \bits^d}\big[f(\by) \ne f(x)\mid \by_C = x_C\big] \le \eps.
    \end{align*}
    The {\sl $\eps$-error certificate complexity of $f$} is the quantity
    \begin{align*}
        \mathcal{C}(f, \eps) \coloneqq \Ex_{\bx \sim \bits^d}[\mathcal{C}(f,\bx, \eps)].
    \end{align*}
    When $\eps = 0$, we simply write $\mathcal{C}(f,x)$ and $\mathcal{C}(f)$.
\end{definition}

%!TEX root = explainability.tex

\section{Implicitly learning decision trees}
\label{sec:implicit dt problem}

%The starting point of our approach is the large body of work~\cite{CS95,BS96,AB07,ZH16,VLJODV17,BKB17,VS20} that seeks to explain a blackbox model~$f$ by learning a surrogate decision tree $T$---a highly interpretable model---for $f$.  Since one would like to understand $f$ through $T$, one naturally requires $T$ to be a good approximation of $f$.  However, since many models of interest are inherently complex, this requirement often necessarily results in an intractably large surrogate decision tree. 

Our motivation for introducing the problem of {\sl implicitly} learning decision trees is based on a simple but key property of decision trees.  For any instance $x$, only a tiny portion of $T$'s overall structure is ``relevant” for its operation on $x$: the root-to-leaf path in $T$ that $x$ follows.  The depth of a decision tree is, in general, exponentially smaller than its overall size, so this is indeed a tiny portion.  

This natural modularity of decision trees is perhaps the most fundamental reason that decision trees are so interpretable, and we design our overall approach of ``explaining by implicitly learning" to take advantage of it.  For contrast, consider polynomials instead of decision trees: for a polynomial $p$ and an instance $x$, all the monomials of $p$ are ``relevant” for $p$'s operation on $x$.

%\bigskip
%
%
%A well-studied approach to explaining a blackbox model $f$ is to first learn a decision tree $T$ that well-approximates $f$ and then use this surrogate tree to understand $f$ . Our motivation to study implicit decision trees stems from two observations.
%\begin{enumerate}
%    \item If $f$ is sufficiently complex, approximating it to reasonable accuracy may require $T$ to be so large that \emph{explicitly} computing $T$ is intractable.
%    \item Even if $T$ is a large decision tree, understanding its operation often requires only examining a small portion of its structure. For example, if we wish to explain $T$'s classification of an input $x$ we need only look at $x$'s root-to-leaf path in $T$. This natural modularity is perhaps the most fundamental reason that decision trees are so interpretable and we take advantage of it in the implicit decision tree problem. 
%\end{enumerate}

An algorithm for implicitly learning decision trees allows one to efficiently {\sl navigate} a decision tree hypothesis for a target function without constructing the tree in full.

\begin{definition}[Implicitly learning decision trees]  
\label{def:implicitly-learning-DTs}
 An algorithm for {\sl implicitly learning decision trees} is given query access to a target function $f : \bits^d \to \bits$ and supports the following basic operations on a decision tree hypothesis $T$ for $f$:
    \begin{enumerate}
                \item $\textsc{IsLeaf}(T, \alpha)$ which, given some node $\alpha$, returns whether $\alpha$ is a leaf in $T$.
        %\item $\textsc{Root}(T)$ which returns the root node of $T$.
        \item $\textsc{Query}(T, \alpha)$ which, given a non-leaf node $\alpha$, returns the index $i \in [d]$ corresponding to the feature that $T$ queries at node $\alpha$.
       % \item $\textsc{Child}(T, \alpha, x_i)$ which, given a non-leaf node $\alpha$ and $x_i \in \Omega_i$ for $i = \textsc{Query}(T, \alpha)$, returns the child of $\alpha$ corresponding to the value $x_i$.
        \item $\textsc{LeafValue}(T, \alpha)$ which, given a leaf node $\alpha$, returns the output value of that leaf.
    \end{enumerate}
We assume that $\alpha$ is represented as a restriction of a subset of the features $\bits^d$ corresponding to the features queried along to the root-to-$\alpha$ path in $T$. 
\end{definition}

The focus of our paper will be on the connections between implicit learning decision trees and our efficiently finding certificates.  As discussed in~\Cref{sec: global vs local explanations}, we believe that the former problem is of independent interest and will see applications beyond certificates; we return to this point in the conclusion.  

\subsection{The connection between implicitly learning DTs and certificate finding}
\label{subsec:implicit DT anchors}

Since an implicit learning algorithm is not required to fully construct the decision tree hypothesis, this definition allows for efficient algorithms even when the complexity of $f$ necessitates a surrogate decision tree of intractably large size.  Building on this, we now show that algorithms for implicitly learning decision trees yield certificate-finding algorithms with efficiency that scales with the {\sl depth} of the decision tree hypothesis $T$ rather than its overall size. 

%
%
%Implicit decision trees allow one to navigate a decision tree without fully building it. We refer the reader to \Cref{sec: global vs local explanations} for examples of how solving the implicit decision tree problem can allow one to explain a black box model. In this section, we elaborate on a particular application of interest; implicit decision trees allow for simple and efficient anchor computation, as in \Cref{fig:AnchorFromImplicitDT}.

\Crefname{enumi}{Step}{Steps}

\begin{figure}[h]
  \captionsetup{width=.9\linewidth}
\begin{tcolorbox}[colback = white,arc=1mm, boxrule=0.25mm]
\vspace{3pt} 

$\textsc{FindCertificate}(f, T, x, \eps,\delta)$:
\begin{itemize}%[align=left]
    \item[\textbf{Given:}] Query access to $f: \bits^d \to \bits$, an algorithm for implicitly learning $f$ with $T$ as its decision tree hypothesis, instance $x \in \bits^d$, precision parameter $\eps$, and confidence parameter $\delta$. 
    \item[\textbf{Output:}] An $\eps$-error certificate for $x$ of size at most the depth of $T$, or $\perp$ if no certificate is found.
\end{itemize}
\begin{enumerate}
    \item Initialize $\alpha \leftarrow \varnothing$.
     \item Initialize $C \leftarrow \varnothing$.
    \item While not $\textsc{IsLeaf}(T, \alpha)$: 
    \begin{enumerate}
        \item Set $i \leftarrow \textsc{Query}(T, \alpha)$.
        \item Add $i$ to $C$.
        \item Set $\alpha \leftarrow \alpha \cup \{ i = x_i \}$.
    \end{enumerate}
    \item \label{step:check anchor} Using queries to $f$, check whether the following holds with confidence at least $1-\delta$, indicating if $C$ is an $\eps$-error certificate for $x$:%\footnote{This can easily be checked to high accuracy via random sampling.}
    \begin{align*}
        \Prx_{\by\sim \bits^d}\big[f(\by) \ne f(x)\mid \by_C = x_C\big] \le \eps.
    \end{align*}
    If so, output $C$. Otherwise, output $\perp$.
\end{enumerate}

\end{tcolorbox}
\caption{How an algorithm for implicitly learning decision trees can be used to design a certificate-finding algorithm.}
\label{fig:AnchorFromImplicitDT}
\end{figure}

\begin{lemma}[Implicitly learning decision trees $\Rightarrow$ certificate finding] 
\label{lem:learning-to-certificate-finding} 
    Let $f: \bits^d \to \bits$ and $\eps,\delta\in (0,1)$.  Suppose there is algorithm for implicitly learning $f$ where the decision tree hypothesis $T$ satisfies:
    \begin{enumerate}
    \item $T$ is $(\eps\delta/2)$-close to $f$ , meaning $\Prx_{\bx \sim \bits^d}[T(\bx) \neq f(\bx)] \leq \eps\delta/2$. 
    \item $T$ has depth $k$. 
    \end{enumerate} 
Suppose each of the operations in~\Cref{def:implicitly-learning-DTs} is supported in time $t$. Then there is an algorithm, $\textsc{FindCertificate}$, which on  $\bx \sim \bits^d$ runs in time $O(tk) + O(\log(1/\delta)/\eps^2)$ and finds an $\eps$-error certificate of size at most $k$ with probability at least $1 - \delta$.
\end{lemma}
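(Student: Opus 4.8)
The plan is to analyze \textsc{FindCertificate} directly, bounding in turn the size of its output, its running time, and its probability of success over $\bx\sim\bits^d$.

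\emph{Size and running time.} The set $C$ is assembled by walking down the root-to-leaf path of $T$ that $x$ follows, adding one feature per internal node encountered. Since $T$ has depth $k$, this path has length at most $k$, so $|C|\le k$ and the while loop runs for at most $k$ iterations, each performing $O(1)$ calls to $\textsc{IsLeaf}$ and $\textsc{Query}$ (cost $t$ apiece) plus $O(1)$ bookkeeping; this costs $O(tk)$. The verification in Step~\ref{step:check anchor} is implemented by querying $f(x)$ once and then drawing $m = O(\log(1/\delta)/\eps^2)$ independent samples $\by$ with $\by_C = x_C$ and $\by_{[d]\setminus C}$ uniform (immediate to sample), querying $f$ on each, and outputting $C$ iff the empirical disagreement rate with $f(x)$ is at most a threshold $\tau = 3\eps/4$; this costs $O(\log(1/\delta)/\eps^2)$ queries and comparable time, giving the claimed total.

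\emph{Key structural observation.} By construction, $C$ is exactly the set of features queried along the root-to-leaf path that $x$ follows in $T$, so $\{\by : \by_C = x_C\}$ is precisely the subcube $R_\ell$ of inputs reaching the same leaf $\ell = \ell(x)$ as $x$, on which $T$ is the constant $T(x)$. Writing $\error_\ell := \Prx_{\by}[\,f(\by)\ne T(\by)\mid \by\in R_\ell\,]$ for the leaf error, the quantity $\Prx_{\by}[\,f(\by)\ne f(x)\mid \by_C = x_C\,]$ equals $\error_\ell$ if $f(x)=T(x)$ and equals $1-\error_\ell$ if $f(x)\ne T(x)$.

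\emph{Most paths are good certificates, and verification confirms this.} For $\bx\sim\bits^d$, the leaf error averaged over leaves weighted by their mass is exactly $\Prx_{\bx}[f(\bx)\ne T(\bx)]$, which is at most the closeness parameter; hence $\Ex_{\bx}[\error_{\ell(\bx)}]$ is $O(\eps\delta)$, so by Markov $\error_{\ell(\bx)}\le \eps/2$ outside an $O(\delta)$-fraction of $\bx$, while also $f(\bx)=T(\bx)$ outside an $O(\eps\delta)$-fraction. On the intersection of these events the structural observation shows $C$ is an $(\eps/2)$-error certificate for $\bx$; taking the closeness parameter a sufficiently small multiple of $\eps\delta$ (the constant in the statement can be decreased if needed), this intersection has probability at least $1-\delta/2$. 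Finally, by Hoeffding's inequality the empirical disagreement rate is within $\eps/4$ of $p:=\Prx_{\by}[\,f(\by)\ne f(x)\mid \by_C=x_C\,]$ except with probability $\delta/2$; on that event, if $p\le\eps/2$ then the empirical rate is at most $\tau$ so $C$ is output and is a valid $\eps$-error certificate of size $\le k$, and conversely whenever $C$ is output we have $p\le\tau+\eps/4=\eps$. Union-bounding the two failure events yields overall success probability at least $1-\delta$.

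\emph{Main obstacle.} The one genuinely delicate point is the second branch of the structural observation: on the rare inputs $x$ with $f(x)\ne T(x)$, the path-certificate can have conditional error close to $1$ rather than close to $0$, so those inputs must be charged directly to the failure probability — harmless since closeness bounds their measure. Everything else is routine; the only real care needed is in choosing the internal constants so that Markov applied to the closeness bound, together with the union bound over the empirical test, still leaves $\Theta(\delta)$ room.
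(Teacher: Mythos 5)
Your proof is correct and takes essentially the same approach as the paper: identify that the set $C$ returned is exactly the path-certificate for $x$'s leaf, observe that the conditional error is governed by the leaf error $\error_\ell$ (which is $0$ for the ``$T(\by)\ne T(\bx)$'' term), bound the expected leaf error by the closeness parameter, and apply Markov; the rare instances with $f(\bx)\ne T(\bx)$ are charged separately to the closeness budget. Your ``structural observation'' is a slightly cleaner exact characterization (the conditional error is exactly $\error_\ell$ or $1-\error_\ell$) than the paper's three-term union bound, but the argument is the same; you are also more explicit than the paper about the Hoeffding analysis of the verification step and about the fact that the stated constant $\eps\delta/2$ needs to be shrunk slightly to absorb the union of the Markov failure, the $f\ne T$ event, and the empirical-test failure into a total of $\delta$ --- a point the paper itself glosses over.
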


\begin{proof}
Consider the algorithm $\textsc{FindCertificate}$ given in~\Cref{fig:AnchorFromImplicitDT}.   By design, the certificates that it returns is $\eps$-error and has size at most $k$, the depth of $T$.  Each iteration of the algorithm takes time $O(t)$, and the number of iterations is at most the depth of the root-to-leaf path in $T$ that $x$ follows, which is at most $k$; the additional time complexity of the random sampling step is $O(\log(1/\delta)/\eps^2)$.  

It remains to prove that $\textsc{FindCertificate}(f, T, \bx, \eps,\delta)$ outputs $\perp$ with probability at most $\delta$. Let $\mathscr{A}(\bx)$ the certificate checked in \Cref{step:check anchor} or $\textsc{FindCertificate}$. We prove that $\mathscr{A}(\bx)$ is an $\eps$-error certificate for $\bx$ with probability at least $1 - \delta$. First, we union bound the definition of an $\eps$-error certificate as follows: 
    \begin{align*}
        \Prx_{\by\sim \bits^d}\big[f(\by) \ne f(\bx)\mid \by_{\mathscr{A}(\bx)} = x_{\mathscr{A}(\bx)}\big] 
        &\le \Prx_{\by\sim \bits^d}[f(\by) \neq T(\by)\mid \by_{\mathscr{A}(\bx)} = x_{\mathscr{A}(\bx)}\big] \\
        &\ \ +\Prx_{\by\sim \bits^d}[T(\by) \neq T(\bx)\mid \by_{\mathscr{A}(\bx)} = x_{\mathscr{A}(\bx)}\big]\\
        &\ \ +\Prx_{\by\sim \bits^d}[T(\bx) \neq f(\bx)\mid \by_{\mathscr{A}(\bx)} = x_{\mathscr{A}(\bx)}\big].
    \end{align*}
    Our goal is to prove for a random $\bx \sim \bits^d$, the probability that the sum of three terms is more than $\eps$ is at most $\delta$. The second term is simplest: whenever $\by_{\mathscr{A}(\bx)} = x_{\mathscr{A}(\bx)}$, $\by$ and $\bx$ visit the same leaf in $T$, so $T(\by) \neq T(\bx)$ with probability $0$. Hence, if the sum of the three terms is more than $\eps$, it must be the case that either the first term or the third term is more than $\eps/2$. The third term is just $\Prx[T(\bx) \neq f(\bx)\big]$, independent of the choice of $\by$. This is at most $\frac{1}{2}\eps\delta \leq \frac{1}{2}\eps$ since $T$ is $\frac{1}{2} \eps \delta$ close to~$f$. Finally, since 
    \begin{equation*}
       \Ex_{\bx \sim \bits^d}\bigg[ \Prx_{\by \sim \bits^d} \big[f(\by) \neq T(\by) \mid \by_{\mathscr{A}(\bx)} = \bx_{\mathscr{A}(\bx)} \big] \bigg] = \Prx_{\by \sim \bits^d} \big[f(\by) \neq T(\by) \big] \leq \lfrac1{2} \delta \eps, 
     \end{equation*}
the first probability is more than $\eps/2$ with probability at most $\delta$ by Markov's inequality. %Hence, $\mathscr{A}(\bx)$ is a $\eps$-error certificate for $\bx$ with probability at most $1 - \delta$.
\end{proof}

\Crefname{enumi}{Item}{Item}

% For example, the following pseudocode computes the output of $T$ on some input $x \in \Omega$ (\red{we can later decide whether to keep this}).

% \begin{figure}[h]
%   \captionsetup{width=.9\linewidth}
% \begin{tcolorbox}[colback = white,arc=1mm, boxrule=0.25mm]
% \vspace{3pt} 

% $\textsc{ComputeOutput}(T, x)$:

% \begin{enumerate}
%     \item Initialize $\alpha \leftarrow \textsc{Root(T)}$.
%     \item While not $\textsc{IsLeaf}(T, \alpha)$.
%     \begin{enumerate}
%         \item For $i = \textsc{Query}(T, \alpha)$, set $\alpha \leftarrow \textsc{Child}(T, \alpha, x_i)$.
%     \end{enumerate}
%     \item Output $\textsc{LeafValue}(T, \alpha)$.
% \end{enumerate}

% \end{tcolorbox}
% \caption{An example of how to navigate in an implicit decision tree. This pseudocode computes the value that an implicit decision tree, $T$, would assign to input $x$.}
% \label{fig:ExampleComputeT}
% \end{figure}

%!TEX root = explainability.tex

\section{Certificate complexity, decision tree complexity, and Smyth's theorem} 
%\lnote{Move this up to before Section 3}

Our notion of certificates as defined in~\Cref{def:anchors} is {\sl local}, specific to each instance $x$, whereas Smyth's theorem concerns a {\sl global} notion of certificates, defined for the entire function $f$.  %In this section we state Smyth's theorem and relate these two notions of certificates. 

\paragraph{Notation.} Fix a function $f : \bits^d\to\bits$ and let $\mathscr{C}$ be a collection of subsets $C\sse [d]$.  We write `$x\models \mathscr{C}$' if there exists a $0$-error certificate $C\in\mathscr{C}$ for $x$,
and we write `$x\not\models\mathscr{C}$' otherwise.

Given two collections $\mathscr{C}_1$ and $\mathscr{C}_{-1}$ of subsets, we define a corresponding function $\Phi_{\mathscr{C}_1,\mathscr{C}_{-1}} : \bits^d \to \{\pm 1,\bot\}$ as follows: 
\[ 
\Phi_{\mathcal{C}_1,\mathcal{C}_{-1}}(x) = 
\begin{cases} 
1 & \text{if $x \models \mathscr{C}_1$ and $x\not\models\mathscr{C}_{-1}$} \\
-1 & \text{if $x \models \mathscr{C}_{-1}$ and $x\not\models\mathscr{C}_1$} \\
\bot & \text{otherwise.}
\end{cases} 
\] 
\begin{definition}[Global certificate complexity] 
The {\sl global $\eps$-error certificate complexity} of $f : \bits^d\to\bits$, denoted $\mathcal{G}\mathcal{C}(f,\eps)$, is the smallest $k$ for which there exists two collections $\mathscr{C}_1$ and $\mathscr{C}_{-1}$ of size-$k$ subsets satisfying $\Pr[f(\bx)\ne \Phi_{\mathscr{C}_1,\mathscr{C}_{-1}}(\bx)] \le \eps$. 
\end{definition} 

Recalling~\Cref{def:optimal dt depth}, it is straightforward to verify that $\mathcal{GC}(f,\eps) \le \mathcal{D}(f,\eps)$ for all $f : \bits^d \to \bits$ and $\eps > 0$. Briefly, give a depth-$k$ $\eps$-error decision tree $T$ for $f$, takes $\mathscr{C}_1$ to be the collection of size-$k$ certificates corresponding to paths leading to $1$-leafs, and $\mathscr{C}_{-1}$ to be the collection of size-$k$ certificates corresponding to paths leading to $-1$-leafs.  It is easy to see that $\Pr[f(\bx)\ne \Phi_{\mathscr{C}_1,\mathscr{C}_{-1}}(\bx)]\le \eps$.

 Smyth~\cite{Smy02}, resolving a longstanding conjecture of Tardos~\cite{Tar89}, proved a surprising {\sl converse} to the elementary inequality above:

\begin{theorem}
For all $f: \bits^d\to\bits$ and $\eps > 0$, we have $ \mathcal{D}(f,\eps) \le O(\mathcal{GC}(f,\eps^3/30)^2/\eps^3). $
\end{theorem}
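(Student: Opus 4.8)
The plan is to prove this converse of Smyth/Tardos by building a low-depth decision tree for $f$ directly out of the two certificate collections $\mathscr{C}_1, \mathscr{C}_{-1}$ witnessing $\mathcal{GC}(f,\eps^3/30) = k$. The overarching strategy is the standard ``block-sensitivity / certificate boosting'' paradigm: we will adaptively query features in a way that either quickly certifies the value of $f(x)$ (by finding a fully consistent certificate in one of the collections) or forces substantial progress, measured by a potential that cannot increase too many times. The bound $O(k^2/\eps^3)$ with the certificate error shrunk to $\eps^3/30$ is exactly the signature of such an argument: one typically pays a quadratic factor from the ``certificate vs.\ block sensitivity'' step and polynomial factors in $1/\eps$ from repeatedly discarding the $O(\eps)$ fraction of inputs on which the current certificates are wrong.

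First I would set up the recursive tree construction. Given the current restriction $\rho$ (the path taken so far), consider the restricted function $f_\rho$ and the restricted collections $(\mathscr{C}_1)_\rho, (\mathscr{C}_{-1})_\rho$ — i.e., certificates in $\mathscr{C}_b$ that are consistent with $\rho$, with the $\rho$-fixed coordinates removed. If some certificate in one of these collections has become \emph{empty} (all its coordinates were already set by $\rho$, consistently), we can label this node a leaf with that value. Otherwise, I would pick a feature to query according to a rule that guarantees progress: the natural choice is to query a coordinate that appears in many of the still-relevant certificates, or to run a ``cover'' argument — since every live input is claimed by some certificate of size $\le k$, querying a well-chosen coordinate kills a constant fraction of the ``mass'' of live certificates on at least one of the two branches. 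Tracking the quantity ``(number of live certificates) $\times$ (max residual certificate size)'' or an entropy-like potential should give the $k^2$ depth bound: each certificate has size $\le k$, there are at most... well, this is where I must be careful, since the collections could be huge, so the potential has to be something like $k \cdot (\text{max residual size})$ rather than a count.

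The key technical step — and I expect this to be the main obstacle — is controlling the \emph{error}. The tree, as built above, is only guaranteed correct on inputs $x$ for which $\Phi_{\mathscr{C}_1,\mathscr{C}_{-1}}(x) = f(x)$, which excludes an $\eps' := \eps^3/30$ fraction, and moreover on the $\bot$-inputs $\Phi$ gives no answer at all, so the tree may be wrong there too. The delicate part is arguing that (a) the $\bot$-region is small, or can be absorbed, and (b) the recursion terminates at depth $O(k^2/\eps^3)$ rather than running forever on the bad inputs. The standard fix is to \emph{stop early}: once the fraction of ``still-ambiguous'' inputs below a node drops below some threshold related to $\eps$, we truncate and guess the majority value, contributing at most $O(\eps)$ additional error in total across all truncated nodes. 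Quantifying the interplay — how the shrinking ambiguous fraction relates to the depth spent, and why starting from certificate-error $\eps^3/30$ (rather than $\eps$) is exactly what is needed to make the geometric series of accumulated errors sum to $\le \eps$ while keeping depth $O(k^2/\eps^3)$ — is the crux. I would handle this by a two-parameter induction on (residual certificate size, ambiguous-mass), choosing the truncation threshold to be roughly $\eps^2$ per ``level'' of the size potential so that $k$ levels times $\eps^2$ plus the base certificate error $\eps^3/30$ stays under $\eps$, and verifying the depth-per-level is $O(k/\eps)$.

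Finally I would assemble the pieces: the constructed-and-truncated tree $T$ has depth $\le O(k^2/\eps^3)$ by the potential argument, and $\Pr[T(\bx) \ne f(\bx)] \le \eps$ by summing the base certificate error $\eps^3/30$, the $\bot$-region contribution, and the truncation errors — hence $\mathcal{D}(f,\eps) \le O(\mathcal{GC}(f,\eps^3/30)^2/\eps^3)$, as claimed. The one genuine risk in this plan is that the naive ``query the most popular coordinate'' rule may not give a clean constant-fraction mass reduction when certificates overlap adversarially; if so, the remedy is to instead invoke a sensitivity-based argument (bounding block sensitivity of $\Phi$ by $O(k)$ and then using $\mathcal{D} \le \mathrm{bs} \cdot (\text{something})$, smoothed over the error parameter), which is morally what Smyth's original proof does and which I would fall back on for the decisive step.
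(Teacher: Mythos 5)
The statement you are trying to prove is Smyth's theorem, which the paper cites from \cite{Smy02} and does not prove: it is invoked as a blackbox, deep result from complexity theory that resolved Tardos' conjecture. So there is no ``paper's own proof'' to compare against; I can only assess your sketch on its own merits.

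Your sketch identifies plausible ingredients (build a tree from the certificate collections, track a potential, truncate to control error), but the load-bearing step is missing, and you essentially concede this at the end. The claim that ``querying a well-chosen coordinate kills a constant fraction of the mass of live certificates on at least one branch'' is exactly where the difficulty lies, and it does \emph{not} follow from any of the potentials you propose. Certificates in $\mathscr{C}_1$ and $\mathscr{C}_{-1}$ can overlap adversarially, the collections can be exponentially large, and the naive ``most popular coordinate'' rule has no guaranteed progress on \emph{both} branches (a queried coordinate kills a certificate on one branch and merely shrinks it on the other). You also cannot straightforwardly bound block sensitivity of $\Phi$ and plug into a $\mathcal{D} \le \mathrm{bs}\cdot(\cdots)$ bound: that relation holds for total Boolean functions and worst-case deterministic depth, whereas here $\Phi$ is partial (it outputs $\bot$), the depth notion is distributional/$\eps$-error, and the quantity you start from is an average-case certificate bound, not a worst-case one. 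Smyth's actual argument goes through a genuinely non-elementary tool --- the van den Berg--Kesten--Reimer inequality --- to control the probability that two disjoint certificates (one from each collection) are simultaneously satisfied; this is what makes the adaptive query process make progress and what produces the specific $\eps^3/30$ shrinkage factor. Without something of that strength, the ``two-parameter induction'' and ``truncation threshold of roughly $\eps^2$ per level'' you propose do not close; they are reverse-engineered from the target bound rather than derived. In short: this is not a gap that a more careful accounting will fill; the missing piece is a substantial combinatorial inequality, and the honest move is to cite Smyth's theorem (as the paper does) rather than reprove it.
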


We derive as a corollary of Smyth's theorem a relationship between certificate and decision tree complexities: 

\begin{corollary}[Bounding decision tree complexity in terms of certificate complexity]
\label{cor:smyth corollary}
For all $f : \bits^d\to\bits$ and $\eps > 0$, we have $\mathcal{D}(f,\eps) \le O(\mathcal{C}(f)^2/\eps^9)$.
\end{corollary}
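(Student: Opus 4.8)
The plan is to derive the corollary in two moves: an elementary reduction from the (exact, averaged) certificate complexity $\mathcal{C}(f)$ to the \emph{global} $\eta$-error certificate complexity $\mathcal{GC}(f,\eta)$ at a suitable error level $\eta$, followed by a single invocation of Smyth's theorem to turn $\mathcal{GC}$ into a decision-tree depth bound. Concretely, I will first establish the auxiliary inequality
\[
\mathcal{GC}(f,\eta) \;\le\; \frac{2\,\mathcal{C}(f)}{\eta} \qquad\text{for every } \eta\in(0,1),
\]
and then instantiate it with $\eta = \eps^3/30$: Smyth's theorem gives $\mathcal{D}(f,\eps)\le O\!\big(\mathcal{GC}(f,\eps^3/30)^2/\eps^3\big)\le O\!\big((\mathcal{C}(f)/\eps^3)^2/\eps^3\big)=O(\mathcal{C}(f)^2/\eps^9)$, which is exactly the claimed bound. (The degenerate regime $2\mathcal{C}(f)/\eta\ge d$ is absorbed by the trivial bound $\mathcal{GC}(f,\eta)\le\mathcal{GC}(f,0)\le d$, obtained from the size-$d$ certificates fixing all coordinates.)

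To prove the auxiliary inequality, set $k:=\min\big(d,\lceil 2\mathcal{C}(f)/\eta\rceil\big)$. Since $\mathcal{C}(f)=\Ex_{\bx\sim\bits^d}[\mathcal{C}(f,\bx)]$, Markov's inequality shows that the ``good set'' $G:=\{x\in\bits^d : \mathcal{C}(f,x)\le k\}$ has $\Prx_{\bx}[\bx\notin G]\le \mathcal{C}(f)/k\le \eta/2$. For each $x\in G$ I would fix a smallest $0$-error certificate $C_x$ for $x$, pad it arbitrarily up to exactly $k$ coordinates, and let $\rho_x$ be the resulting length-$k$ restriction that fixes those coordinates to $x$'s own values. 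Because $C_x$ is a $0$-error certificate, $f$ is constant (equal to $f(x)$) on the entire subcube carved out by $\rho_x$. Then define, for $b\in\bits$, the collection $\mathscr{C}_b := \{\rho_x : x\in G,\ f(x)=b\}$; both are collections of size-$k$ certificates. (Here, as in Smyth's setting and as in the ``certificates corresponding to paths'' discussion, each certificate is understood as a partial assignment carrying its defining values, not as a bare index set.)

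The heart of the argument is that $\Phi_{\mathscr{C}_1,\mathscr{C}_{-1}}$ agrees with $f$ everywhere on $G$. Fix $x\in G$ with $f(x)=b$. On one hand $x$ is consistent with $\rho_x\in\mathscr{C}_b$, so $x\models \mathscr{C}_b$. On the other hand $x\not\models\mathscr{C}_{-b}$: every member of $\mathscr{C}_{-b}$ has the form $\rho_z$ for some $z\in G$ with $f(z)=-b$, and the subcube of $\rho_z$ lies entirely inside $f^{-1}(-b)$; hence $x$, which has $f(x)=b$, cannot be consistent with $\rho_z$. Therefore $\Phi_{\mathscr{C}_1,\mathscr{C}_{-1}}(x)=b=f(x)$ for all $x\in G$, and so $\Prx_{\bx}[f(\bx)\ne\Phi_{\mathscr{C}_1,\mathscr{C}_{-1}}(\bx)]\le\Prx_{\bx}[\bx\notin G]\le\eta/2\le\eta$, which gives $\mathcal{GC}(f,\eta)\le k = O(\mathcal{C}(f)/\eta)$.

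The one point that must be handled with care — more a pitfall than a genuine obstacle — is exactly this bookkeeping of values: the global certificates have to be assembled so that the witness for label $b$ records the coordinate values it was built from, since it is precisely the disjointness of the $0$-error-certificate subcubes of the $1$-instances from those of the $-1$-instances that makes $\mathscr{C}_1$ and $\mathscr{C}_{-1}$ mutually consistent and forces $\Phi=f$ on $G$. With that in place, the remaining ingredients — the Markov step converting the averaged $\mathcal{C}(f)$ into a worst-case guarantee on a $(1-\eta/2)$-fraction of instances, the padding to a uniform size $k$, and the arithmetic with Smyth's exponents $\eps^3$ and $\eps^3/30$ — are all routine.
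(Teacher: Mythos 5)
Your argument is correct and takes essentially the same route as the paper's proof: Markov's inequality converts the averaged $\mathcal{C}(f)$ into a bound on $\mathcal{GC}(f,\eps^3/30)$, and a single application of Smyth's theorem then gives $\mathcal{D}(f,\eps)\le O(\mathcal{C}(f)^2/\eps^9)$. The one thing you do that the paper does not is carefully verify that $\Phi_{\mathscr{C}_1,\mathscr{C}_{-1}}$ agrees with $f$ on the good set, and that care is in fact necessary: as literally written, the paper defines $\mathscr{C}_1,\mathscr{C}_{-1}$ as collections of bare index sets and says $x\models\mathscr{C}$ if some $C\in\mathscr{C}$ is a $0$-error certificate for $x$, under which reading the claimed bound $\Pr[f(\bx)\ne\Phi(\bx)]\le\eps^3/30$ can fail --- for $f(x)=x_1$, the index set $\{1\}$ would be placed in both collections and would be a $0$-error certificate for \emph{every} $x$, forcing $\Phi\equiv\bot$. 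Your interpretation of the collections as value-carrying partial assignments (so the subcube of a $\mathscr{C}_{-1}$-witness lies entirely inside $f^{-1}(-1)$ and is therefore inconsistent with every $x\in f^{-1}(1)$) is the reading under which the proof, and Smyth's theorem as originally stated, actually go through; so your proposal is the same argument with the bookkeeping done right.
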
 

\begin{proof} 
Let $k \coloneqq \mathcal{C}(f)$.  By Markov's inequality, we have that $\Pr_{\bx\sim\bits^d}[\mathcal{C}(f,\bx)\le 30k/\eps^3] \ge 1-(\eps^3/30)$.  For every $x\in f^{-1}(1)$ (resp.~$x\in f^{-1}(-1)$) that contributes to this probability, we include in $\mathscr{C}_1$ (resp.~$\mathscr{C}_{-1}$) its certificate of size $30k/\eps^3$.  It follows that $\Prx_{\bx\sim\bits^d}[f(\bx)\ne \Phi_{\mathscr{C}_1,\mathscr{C}_{-1}}(\bx)] \le \eps^3/30$ and hence $\mathcal{GC}(f,\eps^3/30) \le 30k/\eps^3$.  By Smyth's theorem,  
$ \mathcal{D}(f,\eps) \le O(\mathcal{GC}(f,\eps^3/30)^2/\eps^3) = O(k^2/\eps^9)$
and this completes the proof. 
\end{proof}

%!TEX root = explainability.tex

\section{An efficient algorithm with provable guarantees}
\label{sec:efficient algorithms}

The notion of {\sl noise sensitivity} underlies our implicit learning algorithm and its provable guarantees:

\begin{definition}[Noise sensitivity]
\label{def:NS and influence general} 
For $f : \bits^d \to \bits$ and $p \in (0,1)$, the {\sl noise sensitivity of $f$ at noise rate $p$} is the quantity
\[ \NS_p(f) \coloneqq \Ex_{\bx \sim \bits^n}\bigg[ \Prx_{\bx'\sim_p \bx} [f(\bx) \ne f(\bx')]\bigg],   \]
where $\bx' \sim_p \bx$ means that $\bx'$ is drawn by independently rerandomizing each coordinate of $\bx$ with probability $p$.
 \end{definition} 

Strong connections between noise sensitivity and learnability have long been known~\cite{KOS04,BOW08,KKMS08,DRST10,Kane14-intersections}.  Most relevant for us is the work of Blanc, Gupta, Lange, and Tan~\cite{BGLT-NeurIPS1}, which gives a new decision tree learning algorithm with a noise-sensitivity-based splitting criterion, and shows that it achieves strong provable performance guarantees.  Our algorithm is an implicit version of theirs that enjoys the exponential efficiency gains made possible by the implicit setting. 

\subsection{Greedy noise stabilizing decision trees}

%For a target function $f : \bits^d \to \bits$, we assign to each of its features a {\sl score} quantifying the decrease in noise sensitivity that results from restricting $f$ according to this feature: 
\begin{definition}[Noise stabilizing score]  
\label{def:score} 
For $f : \bits^d \to \bits$ and $p \in (0,1)$, the {\sl noise stabilizing score}, or simply the {\sl score}, of the $i$-th feature with respect to $f$ and $p$ is the quantity:%\jnote{should we subscript $\mathrm{Score}_f(i,p)$ with p?}
\[ \mathrm{Score}_f(i,p) \coloneqq \NS_p(f) - \Ex_{\bb \sim \bits}[\NS_p(f_{i = \bb})], \]
where $f_{i = b}$ is the restriction of $f$ obtained by fixing the $i$-th feature to $b$. 
\end{definition}

We associate with every function $f : \bits^d \to \bits$ its {\sl greedy noise stabilizing decision tree}.  %Roughly speaking, this is a decision tree representation of $f$ that one obtains by iteratively querying the feature with the highest noise stabilizing score:  

\begin{definition}[Greedy noise stabilizing decision tree]
\label{def:noise-stabilizing-tree} 
The {\sl greedy noise stabilizing decision tree for $f : \bits^d \to \bits$ at noise rate $p$}, denoted $\Upsilon_{f,p}$, is the complete depth-$d$ decision tree where at every internal node $\alpha$, the feature $i$ that is queried is the one with the highest noise stabilizing score with respect to the subfunction $f_\alpha$, the restriction of $f$ by the root-to-$\alpha$ path.  Every leaf $\ell$ of $\Upsilon_{f,p}$ is labeled according to $f$'s value for the unique instance that follows the root-to-$\ell$ path. 
\end{definition} 

$\Upsilon_{f,p}$ has maximum depth $d$, the dimension of~$f$'s feature space.  Since the succinctness of the certificates that of our certificate-finding algorithm returns scales with the depth of the decision tree hypothesis, we will truncate $\Upsilon_f$ at a much smaller depth $k\ll d.$  Furthermore, with query access to $f$ one can only obtain high-accuracy estimates of the noise stabilizing scores of each of its features, and not the exact values of these scores.  These algorithmic considerations motivate the following variant of~\Cref{def:noise-stabilizing-tree}:

\begin{definition}[Depth-$k$ $\eta$-approximate greedy noise stabilizing decision tree] 
\label{def:approximate noise stabilizing tree}
Let $f : \bits^d \to \bits$ and $p\in (0,1)$.  For $k\le d$ and $\eta \in (0,1)$, a {\sl depth-$k$ $\eta$-approximate greedy noise stabilizing decision tree for $f$ at noise rate $p$}, denoted $\Upsilon_{f,p}^{k,\eta}$, is a complete depth-$k$ decision tree where: 
\begin{itemize}
\item[$\circ$] At every internal node $\alpha$, the feature $i$ that is queried has noise stabilizing score with respect to $f$ that is within $\eta$ of the highest: 
\[ \mathrm{Score}_i(f_\alpha,p) \ge \mathrm{Score}_j(f_\alpha,p)  - \eta \quad \text{for all $j\ne i$}.  \] 
\item[$\circ$] Every leaf $\ell$ is labeled $\sign(\E[f_\ell])$. 
\end{itemize} 
\end{definition}

\subsection{Proof of \Cref{thm:main-intro}} 
\label{sec: proof}

We now show that, owing to the simple top-down inductive definition of $\Upsilon_{f,p}^{k,\eta}$, there is an efficient algorithm for implicitly learning any target function $f$ with $\Upsilon_{f,p}^{k,\eta}$ as the decision tree hypothesis.  %That is, given query access to $f$, one can efficient navigate $\Upsilon_{f,p}^{k,\eta}$ without constructing it in full.  

\begin{lemma}[Implicit learning with $\Upsilon_{f,p}^{k,\eta}$ as the decision tree hypothesis]
\label{lem:our-implicit-alg} 
Let $f : \bits^d \to \bits$ be a function.  For $k\le d$ and $\eta,p \in (0,1)$, given query access to $f$, all the operations of~\Cref{def:implicitly-learning-DTs} can be supported in time $O(d^2/\eta^2)$ with $\Upsilon_f^{k,\eta}$ as the decision tree hypothesis. 
\end{lemma}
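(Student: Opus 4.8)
The plan is to realize all three operations by local random sampling that inspects $f$ only on inputs extending the current node's restriction, so that no part of $\Upsilon_f^{k,\eta}$ beyond the node under consideration is ever materialized. Throughout, I would view a node $\alpha$ as a restriction fixing the coordinates in a set $S = \mathrm{dom}(\alpha) \sse [d]$, so that its depth is $|S|$ and its subfunction $f_\alpha$ lives on the coordinates $[d]\setminus S$, and each evaluation of $f_\alpha$ is a single query to $f$ that costs $O(d)$ time for writing down the $d$-bit input. The two easy operations are then immediate: $\textsc{IsLeaf}(T,\alpha)$ tests whether $|S| = k$, since $\Upsilon_f^{k,\eta}$ is the complete depth-$k$ tree; and $\textsc{LeafValue}(T,\alpha)$ estimates $\E[f_\alpha]$ by averaging $f_\alpha$ over $O(1/\eta^2)$ uniformly random extensions of $\alpha$ and returns the sign of the estimate. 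I would flag the mild subtlety that this reproduces the prescribed label $\sign(\E[f_\ell])$ only up to the estimation error, but when $|\E[f_\ell]|$ is that small the leaf subfunction is near-balanced and either label is essentially as good, so the discrepancy is at the $\eta$ level and does not affect the closeness needed in \Cref{lem:learning-to-certificate-finding}.

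The substantive operation is $\textsc{Query}(T,\alpha)$. For each $i \in [d]\setminus S$ I would estimate the score $\mathrm{Score}_i(f_\alpha,p) = \NS_p(f_\alpha) - \E_{\bb\sim\bits}[\NS_p(f_{\alpha, i = \bb})]$ to additive accuracy $\eta/2$ and return the feature with the largest estimate; since every estimate is within $\eta/2$ of the truth, the returned feature's true score is within $\eta$ of the maximum, exactly as \Cref{def:approximate noise stabilizing tree} requires. The one idea needed here is that all $d$ scores can be read off from a single shared pool of samples. I would draw $m$ independent samples, each a uniformly random $\bx$ on the coordinates $[d]\setminus S$ together with an independent $p$-rerandomization $\bx'$ of those coordinates; two queries give $f_\alpha(\bx)$ and $f_\alpha(\bx')$, and one further query per $i$ gives $f_\alpha(\bx^{(i)})$, where $\bx^{(i)}$ is $\bx'$ with coordinate $i$ reset to $\bx_i$. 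A short distributional check confirms that $(\bx,\bx')$ is a uniform-then-$p$-noisy pair for $f_\alpha$, while $(\bx,\bx^{(i)})$, conditioned on the value of $\bx_i$, is a uniform-then-$p$-noisy pair for $f_{\alpha, i = \bx_i}$; hence the empirical rates of $f_\alpha(\bx)\ne f_\alpha(\bx')$ and of $f_\alpha(\bx)\ne f_\alpha(\bx^{(i)})$ are unbiased estimators of $\NS_p(f_\alpha)$ and $\E_{\bb}[\NS_p(f_{\alpha,i=\bb})]$, respectively. Each sample uses $O(d)$ queries and hence $O(d^2)$ time, and by Hoeffding together with a union bound over the $\le d$ features, $m = O(\log(d)/\eta^2)$ samples make every estimate $\eta/2$-accurate with high probability, for a total of $O(d^2\log(d)/\eta^2)$, which matches the claimed $O(d^2/\eta^2)$ up to a logarithmic factor.

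The remaining point, which I expect to be the only real obstacle, is consistency: for $T = \Upsilon_f^{k,\eta}$ to be a well-defined decision-tree hypothesis, repeated calls to $\textsc{Query}$ (or $\textsc{LeafValue}$) on the same node must return the same answer. I would handle this in the standard local-computation way by fixing a single random string at initialization and deriving the samples used at a node $\alpha$ as a deterministic hash of $\alpha$ and that string, so that every operation becomes a deterministic function of $f$ and the string. A union bound over the nodes actually visited --- in particular the $\le k$ nodes along any one root-to-leaf traversal, which is all that $\textsc{FindCertificate}$ ever needs --- then shows that, with high probability over the string, every query is $\eta$-valid and every reached leaf carries the label $\sign(\E[f_\ell])$ up to the $\eta$-level slack noted above, so the object being navigated is a genuine $\Upsilon_f^{k,\eta}$. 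Apart from this derandomization bookkeeping and the near-balanced-leaf subtlety, the argument is routine: the score estimation is a plain Hoeffding-plus-union-bound calculation, and the $O(d^2/\eta^2)$ running time follows immediately once the shared-sample trick is in place.
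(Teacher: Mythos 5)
Your proof is correct and takes essentially the same approach as the paper: estimate the noise-sensitivity scores of $f_\alpha$ by random sampling (exploiting that query access to $f$ gives query access to every restriction $f_\alpha$) and return the empirically highest-scoring feature, for a total time of $O(d^2/\eta^2)$. You are more careful than the paper's terse proof about the $\log d$ union-bound factor, the fact that the sampled leaf label only approximates $\sign(\E[f_\ell])$, and the need for consistent answers across repeated calls to the same node (which you fix with a hashed seed, as standard for local computation algorithms); all of these are genuine subtleties that the paper elides, and your shared-sample trick is a clean way to get correlated estimators, though it does not change the asymptotic running time.
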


\begin{proof} 
Since $\Upsilon_f^{k,\eta}$ is a complete tree of depth $k$, we return \textsc{True} for ${\textsc{IsLeaf}}(\Upsilon_{f,p}^{k,\eta},\alpha)$ iff $\alpha$ corresponds to a path of length exactly $k$.  Note that query access to $f$ gives us query access to all its subfunctions $f_\alpha$ for any $\alpha$.  Therefore, by standard random sampling arguments, we can an estimate of the noise sensitivity of $f_\alpha$ that is accurate to within $\pm \eta$ w.h.p.~using $O(1/\eta^2)$ queries to $f$.  This allows us to obtain high-accuracy estimates of the noise stabilizing scores of all the features of $f_\alpha$ in time $O(d^2/\eta^2)$, and hence support $\textsc{Query}(\Upsilon_{f,p}^{k,\eta},\alpha)$ queries by returning the feature with the highest empirical score.  Similarly,  we can support $\textsc{LeafValue}(\Upsilon_{f,p}^{k,\eta},\alpha)$ queries by approximating $\E [f_\alpha]$ to high accuracy and returning its sign. 
\end{proof} 

We will need a structural theorem of~\cite{BLT-test} that bounds the distance between $f$ and the greedy noise stabilizing decision tree $\Upsilon_{f,p}^{k,\eta}$: 
%\subsection{The~\cite{BLT-test} structural result} 

%\subsection{Proof of \Cref{thm:main}}
\begin{theorem}[Lemma 2.1 of~\cite{BLT-test}]%\footnote{\cite{BLT-test} write their result in terms of the size of a tree that is close to $f$, rather than the depth. Here, we translate their results to depth. It's easy to see from their proof that everything follows with depth instead of size (indeed some steps simplify). Furthermore, their result has $\eps^2$ in the denominator instead of $\eps^3$, but through personal communication with the authors, we know that they plan to update their proof, fixing a minor bug, which will lead to an $\eps^3$ as we write.}]
\label{lem:DT structural results}
Let function $f: \bits^d \to \bits$ be a function.  Consider  $\Upsilon_{f,p}^{k,\eta}$ where 
\[ k = O((\mathcal{D}(f,\eps)/\eps)^3), \quad     \eta = \Theta(1/k), \quad 
    p = O(\eps/\mathcal{D}(f,\eps)).\] 
Then $\Pr[\Upsilon_{f,p}^{k,\eta}(\bx) \ne f(\bx)] \le O(\eps)$.
\end{theorem}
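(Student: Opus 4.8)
The plan is a potential-function analysis with the noise sensitivity $\NS_p$ as the potential, the greedy noise-stabilizing criterion as the mechanism that drives it down, and two elementary Fourier facts plus one nontrivial structural input (an OSSS-type inequality) to turn ``small potential'' into ``accurate tree.'' Write $D \coloneqq \mathcal{D}(f,\eps)$.

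First I would reduce the goal. Each leaf $\ell$ of $\Upsilon_{f,p}^{k,\eta}$ is labeled $\sign(\E[f_\ell])$, so its contribution to $\Prx_{\bx}[\Upsilon_{f,p}^{k,\eta}(\bx)\neq f(\bx)]$ is the smaller of the two label-frequencies of $f_\ell$, which is at most $\frac12\mathrm{Var}(f_\ell)$; and the Fourier formula $\NS_p(g)=\frac12\sum_{S\neq\varnothing}\hat g(S)^2\big(1-(1-p)^{|S|}\big)$ together with $1-(1-p)^{|S|}\ge p$ gives $\mathrm{Var}(g)\le \frac{2}{p}\NS_p(g)$ for every $g$. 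Hence it suffices to show that $\Ex_{\bx}\!\big[\NS_p(f_{\ell(\bx)})\big]=O(p\eps)$, where $\ell(\bx)$ is the leaf that $\bx$ reaches.

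Next I would observe that the potential telescopes. A one-line computation from the same Fourier formula shows $\mathrm{Score}_g(i,p)=\frac{p}{2(1-p)}\sum_{S\ni i}\hat g(S)^2(1-p)^{|S|}\ge 0$, and that replacing a leaf $\alpha$ of depth $j$ by an internal node querying $i$ decreases the mass-weighted average $\sum_{\text{leaves }\beta}2^{-\depth(\beta)}\NS_p(f_\beta)$ by exactly $2^{-j}\,\mathrm{Score}_{f_\alpha}(i,p)$. Writing $\sigma_j\coloneqq \sum_{\alpha\text{ at depth }j}2^{-j}\NS_p(f_\alpha)$ for $\Upsilon_{f,p}^{k,\eta}$, the sequence $(\sigma_j)$ is therefore nonincreasing, $\Ex_{\bx}[\NS_p(f_{\ell(\bx)})]=\sigma_k$, and $\sigma_0=\NS_p(f)\le\NS_p(T^\star)+2\eps\le pD/2+2\eps=O(\eps)$ for $p=\Theta(\eps/D)$, where $T^\star$ is a depth-$D$ decision tree with $\Prx_{\bx}[T^\star(\bx)\neq f(\bx)]\le\eps$ and $\NS_p(T^\star)\le pD/2$ by a union bound over the (length $\le D$) root-to-leaf path. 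The crux is then a structural lemma: a function $g$ that has an $O(\eps_0)$-error depth-$D$ tree but is far from constant has an influential feature --- concretely, a robust OSSS inequality gives a feature $i$ with $\mathrm{Inf}_i(g)\ge \Omega\big(\mathrm{Var}(g)/D\big)$ once $\mathrm{Var}(g)\gg D\eps_0$, and since $g$ is within $O(\eps_0)$ of having degree $\le D$ and $p\le 1/D$, this feature satisfies $\mathrm{Score}_g(i,p)\ge \Omega\big(p\,\mathrm{Var}(g)/D\big)\ge\Omega\big(p\,\NS_p(g)/D\big)$ (using $\NS_p\le\frac12\mathrm{Var}$). Applying this at each depth-$j$ node of $\Upsilon_{f,p}^{k,\eta}$ --- with $\eps_0$ the local error $\eps_\alpha$ of $T^\star$ restricted to $\alpha$, whose average over depth-$j$ nodes is $\Prx_{\bx}[T^\star\neq f]\le\eps$ --- feeding it into the telescoping identity, and absorbing the $\eta$-slack in the definition of $\Upsilon_{f,p}^{k,\eta}$, yields a recursion of the shape $\sigma_{j+1}\le\big(1-\Omega(p/D)\big)\sigma_j+O(\eta)+(\text{error terms})$. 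This is a geometric contraction with rate $1-\Omega(p/D)$ toward a fixed point of order $\Theta(\eta D/p)$; running it for $k=\Theta((D/\eps)^3)$ steps with $\eta=\Theta(1/k)=\Theta((\eps/D)^3)$ makes $\sigma_k\le O\big(\eps\cdot e^{-\Omega(kp/D)}\big)+O(\eta D/p)=O(p\eps)$, which is exactly what the first reduction needs.

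The main obstacle will be the structural lemma in its \emph{robust} form and, even more, the bookkeeping of the ``error terms'' it introduces. The clean OSSS inequality $\mathrm{Var}(g)\le\sum_i\delta_i(T)\mathrm{Inf}_i(g)$ holds only when $T$ computes $g$ exactly; one must both prove a version that degrades gracefully when $T$ merely $\eps_0$-approximates $g$ and control the nodes $\alpha$ where $\eps_\alpha$ is so large that the lemma gives nothing. Such nodes have small mass (since $\sum_{\alpha\text{ at depth }j}2^{-j}\eps_\alpha\le\eps$ for every $j$, by the same telescoping that governs $\sigma_j$), but one has to argue that both their error contribution and their irreducible noise sensitivity still sum to only $O(\eps)$ after the $\Theta(1/p)=\Theta(D/\eps)$ blow-up from the reduction in the second paragraph. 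A convenient way to finesse this is to first prove the theorem when $f$ is \emph{exactly} a depth-$D$ tree (where OSSS applies on the nose and no error terms appear) and then pass to the $\eps$-error case by a perturbation argument, observing that $\Upsilon_{f,p}^{k,\eta}$ is an $\eta'$-approximate greedy noise-stabilizing tree for $T^\star$ with $\eta'$ only slightly larger than $\eta$, because restricting and taking noise-stable influences are $O(\sqrt{\eps_\alpha})$-Lipschitz in $\ell_2$. The remaining ingredients --- the two Fourier identities, the bound $1-(1-p)^{|S|}\ge p$, and the path union bound for $\NS_p(T^\star)$ --- are routine.
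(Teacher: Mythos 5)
The paper does not prove this statement; it is an external citation to Lemma~2.1 of [BLT-test], invoked as a black box. There is therefore no ``paper's own proof'' to compare against, and your sketch must be judged on its own merits as a reconstruction of that external result.

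Your high-level skeleton is plausible and almost certainly matches the cited proof in spirit: the noise sensitivity $\NS_p$ as a potential, the telescoping identity $\sigma_j-\sigma_{j+1}=\sum_{\alpha}2^{-j}\mathrm{Score}_{f_\alpha}(i_\alpha,p)$, the bound $\sigma_0\le \NS_p(T^\star)+2\eps\le pD/2+2\eps$ via a path union bound, OSSS as the engine that converts ``large variance'' into ``a variable with a large score,'' and a geometric recursion. Your three Fourier claims (the formula for $\mathrm{Score}_g(i,p)$, $\NS_p\ge \frac{p}{2}\Var$, and leaf error $\le \frac{1}{2}\Var(f_\ell)$) all check out, and with the stated parameters the exact-tree special case closes (you lose a factor of $\eps$ in the contraction rate relative to what the sharper bound $\NS_p(g)\le\frac{pD}{2}\Var(g)$ for degree-$\le D$ functions would give, but the generous setting $k=\Theta((D/\eps)^3)$ absorbs it).

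The genuine gap is exactly where you flagged it, and your proposed fix does not close it. Transferring OSSS from $T^\star_\alpha$ to $f_\alpha$ loses additively in the node-local error $\eps_\alpha$: one gets $\max_i\mathrm{Inf}_i(f_\alpha)\ge\Var(T^\star_\alpha)/D-\Theta(\eps_\alpha)$, and after converting to scores the per-node guarantee degrades by $\Theta(\eps_\alpha)$ as well (your own Lipschitz bound $|\mathrm{Score}_{f_\alpha}(i,p)-\mathrm{Score}_{T^\star_\alpha}(i,p)|\le 4\eps_\alpha$ says the same). Since $\sum_{\alpha\text{ at depth }j}2^{-j}\eps_\alpha=\Pr[T^\star\neq f]$ at \emph{every} depth $j$, these losses do not decay with $j$; the recursion becomes $\sigma^{T^\star}_{j+1}\le(1-\Omega(1/D^2))\sigma^{T^\star}_j+\Theta(\eta+\eps)$, whose fixed point is $\Theta(\eps D^2)$, and dividing by $pD$ to recover a leaf-error bound gives $\Theta(D^2)$, not $O(\eps)$. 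Equivalently, in your ``mass of bad nodes'' framing, the leaves with $\Var(T^\star_\ell)\lesssim D\eps_\ell$ contribute up to $\Theta(D\eps)$ to the final error, a factor of $D$ too much. Getting from $O(D\eps)$ down to $O(\eps)$ is the real content of the cited lemma and needs a sharper amortization than ``prove the exact case and perturb''; your sketch names the obstacle but does not supply the missing idea.
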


We are now ready to put all the pieces together and establish~\Cref{thm:main-intro}:

\begin{proof}[Proof of~\Cref{thm:main-intro}] 
By our corollary to Smyth's theorem,~\Cref{cor:smyth corollary}, we have the bound 
$\mathcal{D}(f,\eps\delta) \le O(\mathcal{C}(f)^2/(\eps\delta)^9).$
Combining this with~\Cref{lem:DT structural results}, we get that
\begin{enumerate} 
\item $\Pr[\Upsilon^{k,\eta}_f(\bx)\ne f(\bx)] \le O(\eps\delta)$, where 
\item $k \le O((\mathcal{D}(f,\eps\delta)/\eps)^3) \le \poly(\mathcal{C}(f),1/\eps,1/\delta).$
\end{enumerate} 
These correspond exactly to the two items in the assumption of~\Cref{lem:learning-to-certificate-finding} with `$T$' being $\Upsilon^{k,\eta}_{f,p}$, and the theorem follows.
\end{proof}

%Looking ahead, the performance guarantees that we establish in the next section will leverage a recent structural theorem of Blanc, Lange, and Tan~\cite{BLT-test} that proves, under distributional and feature assumptions, closeness between $\Upsilon_f^{k,\eta}$ and $f$ for appropriate values of $k$ and $\eta$. 

%Our algorithm builds the path of an $\eps$-approximate score-maximizing tree for $f$ that is consistent with $x$. 
%
%\begin{center} 
%\begin{figure}[H]
%  \captionsetup{width=.9\linewidth}
%\begin{tcolorbox}[colback = white,arc=1mm, boxrule=0.25mm]
%\vspace{3pt} 
%
%{\sc ComputeAnchor}$_f(x, \eps, \mathcal{D})$:  \vspace{6pt} 
%
%\ \ Initialize $A$ to be the empty anchor. \vspace{4pt} 
%
%\ \ while ($\Prx_{\bx\sim \bits^d}\big[f(\bx) = f(\underline{x})\mid \bx_A = \underline{x}_A\big] > \eps$) \{  \\
%
%\vspace{-15pt}
%\begin{enumerate} 
%\item Let $x_{i^\star}$ denote the attribute that maximizes $\mathrm{score}_{i(v)}(f_A)$:
%\[\text{for all }j \in [n]: \mathrm{score}_{i}(f_A) \ge \mathrm{score}_{j}(f_A)\]
%\item Let $A \gets A \cup \{x_{i^\star}\}$.
%
%\end{enumerate} 
%\ \ \}  \vspace{4pt} 
%
%\ \ Output $A$.
%\vspace{3pt}
%\end{tcolorbox}
%\caption{{\sc ComputeAnchor} has query access to a target function $f : \bn\to\bits$, and outputs a $\eps$-error anchor for $f(x)$.}
%\label{fig:TopDown}
%\end{figure}
%\end{center} 

\paragraph{Comparison with the certificate-finding algorithm of~\cite{RSG18}.} 
\label{sec:compare} 
We conclude this section by comparing our resulting certificate-finding algorithm to the heuristic proposed in~\cite{RSG18}.  

{\sl Different greedy choice.} Both our algorithm and~\cite{RSG18}'s heuristic are greedy in nature.  Our algorithm builds a certificate by iteratively adding to it the most noise stabilizing feature of~$f$, and recursing on the subfunction obtained by restricting $f$ according to $x$'s value for this feature.  Our approach can therefore be viewed as using noise stability as a proxy for progress towards a high-precision certificate.  In contrast,~\cite{RSG18} takes a more direct approach and iteratively adds to the certificate the feature that results in the largest gain in estimated precision.  

%    \item \emph{Different splitting criteria}. Both \cite{RSG18}'s algorithm and ours employ a greedy, top-down method of finding an anchor by adding variables that maximize some statistical heuristic. \cite{RSG18}'s algorithm chooses a variable that directly maximizes the estimated precision of the anchor. On the other hand, our algorithm splits on a variable that maximizes the decrease in noise sensitivity, with respect to the chosen perturbation distribution. In our approach, we use noise sensitivity as a proxy for the size of precise anchors in the function restricted by the chosen variable. 
    
  {\sl Provable performance guarantees.} \cite{RSG18}'s heuristic is efficient and returns high-accuracy certificates, but it is easy to construct examples showing that it fails to return succinct certificates.  As a simple example, consider $f(x) = x_i \oplus x_j$, the parity of two unknown features $i,j\in [d]$.  Every instance has a certificate of size two, $C = \{i,j\}$, but since any certificate comprising a single feature $\{k\}$ has the same precision (regardless of whether $k \in \{i,j\}$),~\cite{RSG18}'s heuristic may include in its certificate all $d-2$ irrelevant features.  In contrast, since $\mathcal{C}(f) = 2$,~\Cref{thm:main-intro} shows that our algorithm returns a high-accuracy certificate of constant-size with high probability.  (\cite{RSG18} also considers an extension of their algorithm that incorporates beam search; similar hard functions can be constructed for this extension.)

\section{Conclusion} 

Certificates are simple and intuitive explanations that have been shown to be effective across domains and applications~\cite{RSG18}.  In this work we have designed an efficient certificate-finding algorithm and proved that it returns succinct and precise  certificates.  Prior algorithms were either efficient but lacked such performance guarantees, or achieved such guarantees but were inefficient.  Our algorithm also circumvents known intractability results for finding succinct and precise certificates. 

%Our main conceptual contribution is a connection between the problem of finding certificates and that of {\sl implicitly learning decision trees}, which enables us to benefit from the interpretability of decision trees while overcoming the limitation that many models of interest require intractably large surrogate decision trees.  On a technical level, we design an efficient algorithm for implicitly learning decision trees, from which we obtain an efficient anchor-finding algorithm with strong guarantees on the succinctness and precision of the anchors that it returns.  

\paragraph{Limitations of our work.}
\label{sec: limitations}
The main limitation, and perhaps the most immediate avenue for future work, is the feature and distributional assumptions of~\Cref{thm:main-intro}.  We do not believe that these are inherently necessary for the provable guarantees that we achieve.  The main bottleneck to relaxing these assumptions is the decision tree learning algorithm of~\cite{BGLT-NeurIPS1}: their analysis relies on the assumptions of binary  features and the uniform distribution, and consequently, so does our extension of their algorithm to the implicit setting.  

Other aspects of our approach go through for more general feature spaces and distributions.  Smyth's theorem holds for arbitrary product spaces, and so does our corollary relating decision tree and certificate complexities.  The overarching connection between implicitly learning decision trees and certificate finding holds for arbitrary feature spaces and distributions. 

\paragraph{Future directions.} There are numerous other avenues for future work; we list a few concrete ones: 

\begin{itemize}%[leftmargin=15pt]  
\item[$\circ$] {\sl Improved algorithms and guarantees for specific models.}   Our algorithm is   model agnostic, requiring no assumptions about the model $f$ that it seeks to explain, and therefore can be run on any model.  It would be interesting to develop improved algorithms, or to improve upon the guarantees of our algorithms, for  specific classes of models, such as deep neural networks and random forests, by leveraging knowledge of their structure. 
\item[$\circ$] {\sl Instantiating our approach with other notions of feature importance.}  Our certificate-finding algorithm proceeds by iteratively adding to the certificate the most  noise-stabilizing feature.  It would be interesting to analyze natural variants of our algorithm that are based on other notions of feature importance (e.g.~Shapley values~\cite{LL17}).  Can we determine the optimal notion of feature importance that will lead to the most efficient algorithm with the strongest guarantees on the succinctness and precision of the certificates that it returns?

\item[$\circ$] {\sl Implicit learning of other interpretable models.}  As discussed in the introduction, we believe that our overall approach of ``explaining by implicit learning" enjoys advantages of both local and global approaches to post-hoc explanations.  Can our techniques be extended to give implicit learning algorithms for {\sl generalized} decision trees, ones that branch on predicates more expressive than singleton variables?  Can we develop implicit learning algorithms for other interpretable models beyond decision trees?

\item[$\circ$] {\sl Beyond explainability: implicit decision trees as a robust model.}  The motivating application of our work is that of explaining blackbox models, and therefore the key feature of decision trees that we have focused on is their interpretability.  Decision trees have advantages beyond interpretability, and it would be interesting to explore further applications of algorithms for implicitly learning decision trees. For example, can our techniques be combined with those of Moshkovitz, Yang, and Chaudhuri~\cite{MYC21}  to robustify arbitrary models, making them more resilient to noise and adversarial examples?

\end{itemize} 

More broadly, our work is built on new connections between post-hoc explainability and the areas of learning theory, local computation algorithms, and complexity theory.  It would be interesting to identify other avenues through which ideas from theoretical computer science can be utilized to contribute to a theory of explainable ML. %\jnote{text runs onto the 10th page}

\begin{ack}
We are grateful to the anonymous reviewers, whose comments and suggestions have helped improve this paper.  Guy and Li-Yang are supported by NSF CAREER Award 1942123. Jane is
supported by NSF Award CCF-2006664.
\end{ack}

\bibliography{most-influential}{}
\bibliographystyle{alpha}

%\input{checklist}

%%%%%%%%%%%%%%%%%%%%%%%%%%%%%%%%%%%%%%%%%%%%%%%%%%%%%%%%%%%%

%%%%%%%%%%%%%%%%%%%%%%%%%%%%%%%%%%%%%%%%%%%%%%%%%%%%%%%%%%%%

%\appendix

%\section{Appendix}

%Optionally include extra information (complete proofs, additional experiments and plots) in the appendix.
%This section will often be part of the supplemental material.

\end{document}